\newcommand{\mat}[1]{\ensuremath \boldsymbol{\mathbf{#1}}} % this will give upright characters for arabic letters
\newcommand{\order}[1]{\ensuremath{\mathcal{O}(#1)}}
\newcommand{\inR}[1]{\ensuremath \in \mathbb{R}^{#1}}
\newcommand{\K}{\mat{K}}
\newcommand{\Kuu}{\K_\text{U,U}}
\newcommand{\Kxu}{\K_\text{X,U}}
\newcommand{\Kux}{\K_\text{U,X}}
\newcommand{\Kxx}{\K_\text{X,X}}
\newcommand{\Q}{\mat{Q}}
\newcommand{\alp}{\boldsymbol \alpha}
\newcommand{\I}[1]{\mat{I}_{#1}}
\newcommand{\x}{\mat{x}}
\newcommand{\z}{\mat{z}}
\newcommand{\y}{\mat{y}}
\newcommand{\V}{\mat{V}}
\renewcommand{\S}{\mat{S}}
\newcommand{\nystrom}{Nystr\"{o}m}
\newcommand{\mPhi}{\ensuremath \mat{\Phi}}
\newtheorem{theorem}{Theorem}
\newcommand{\rowKhatriRao}[3]{
	\arraycolsep=2pt
	\def\arraystretch{0.1}
	\begin{align} \label{#3}
	&#2 =  \bigast \limits_{i=1}^d #1^{(i)}\\ \nonumber
	&= \left(
	\begin{array}{ccccccc}
	#1^{(1)}(1,:) &\otimes& #1^{(2)}(1,:) &\otimes& \cdots &\otimes& #1^{(d)}(1,:)\\
	#1^{(1)}(2,:) &\otimes& #1^{(2)}(2,:) &\otimes& \cdots &\otimes& #1^{(d)}(2,:)\\
	\vdots && \vdots && \ddots && \vdots\\
	#1^{(1)}(n,:) &\otimes& #1^{(2)}(n,:) &\otimes& \cdots &\otimes& #1^{(d)}(n,:)
	\end{array} \right),
	\end{align}
	% note that this transpose operation commuting is shown in theorem 1e of 
	% https://www.researchgate.net/publication/222040796_Matrix_results_on_the_Khatri-Rao_and_Tracy-Singh_products
	%\arraycolsep=4pt
}
\newcommand{\lesslines}{\looseness=-1} % Try one fewer line for the following paragraph
\newcommand{\dobib}{ % Define the command which allow bibs to be printed for subfiles
	%\printbibliography
	\bibliography{peripheral,core}
	\bibliographystyle{icml2018}
}
\setlist[itemize]{leftmargin=*}
\icmltitlerunning{Scalable Gaussian Processes with Grid-Structured Eigenfunctions}
\begin{document}
\twocolumn[\icmltitle{Scalable Gaussian Processes with Grid-Structured Eigenfunctions (GP-GRIEF)}

% List of affiliations: The first argument should be a (short)
% identifier you will use later to specify author affiliations
% Academic affiliations should list Department, University, City, Region, Country
% Industry affiliations should list Company, City, Region, Country

% You can specify symbols, otherwise they are numbered in order.
% Ideally, you should not use this facility. Affiliations will be numbered
% in order of appearance and this is the preferred way.
\icmlsetsymbol{equal}{*}

\begin{icmlauthorlist}
	\icmlauthor{Trefor W. Evans}{to}
	\icmlauthor{Prasanth B. Nair}{to}
\end{icmlauthorlist}
\icmlaffiliation{to}{University of Toronto, Canada}

\icmlcorrespondingauthor{Trefor W. Evans}{trefor.evans@mail.utoronto.ca}
\icmlcorrespondingauthor{Prasanth B. Nair}{pbn@utias.utoronto.ca}

% You may provide any keywords that you
% find helpful for describing your paper; these are used to populate
% the "keywords" metadata in the PDF but will not be shown in the document
\icmlkeywords{Gaussian Processes, GP, Eigenfunctions, Nystrom, Bayesian, Nonparametric, Kronecker Product, Khatri-Rao Product}

\vskip 0.3in
]

\printAffiliationsAndNotice{} 
\begin{abstract}
\lesslines
We introduce a kernel approximation strategy that enables computation of the Gaussian process log marginal likelihood and all hyperparameter derivatives in \order{p} time.
Our GRIEF kernel consists of $p$ eigenfunctions found using a \nystrom\ approximation from a dense Cartesian product grid of inducing points.
By exploiting algebraic properties of Kronecker and Khatri-Rao tensor products, 
computational complexity of the training procedure can be practically \emph{independent} of the number of inducing points. 
This allows us to use arbitrarily many inducing points to achieve a globally accurate kernel approximation, even in high-dimensional problems.
The fast likelihood evaluation enables type-I or II Bayesian inference on large-scale datasets.
We benchmark our algorithms on real-world problems with up to two-million training points and $10^{33}$ inducing points.
\end{abstract}
 
\section{Introduction}
Gaussian process (GP) modelling is a powerful Bayesian approach for classification and regression, however,
it is restricted to modestly sized datasets since training and inference require \order{n^3} time and \order{n^2} storage, where $n$ is the number of training points~\citep{rasmussen_gpml}.
This has motivated the development of approximate GP methods that use a set of $m$ ($\ll n)$ inducing points to reduce time and memory requirements to \order{m^2n+m^3} and \order{mn}, respectively~\cite{smola_sor,snelson_fitc,titsias_vfe,peng_eigengp}.
However, such techniques perform poorly if too few inducing points are used, and
computational savings are lost on complex datasets that require $m$ to be large.

\lesslines
\citet{wilson_kiss} exploited the structure of inducing points placed on a Cartesian product grid, allowing for $m > n$ while dramatically reducing computational demands over an exact GP.
This inducing point structure enables significant performance gains in low-dimensions,
however, time and storage complexities scale \emph{exponentially} with the dataset dimensionality, rendering the technique intractable for general learning problems unless a dimensionality reduction procedure is applied.
In the present work, a Cartesian product grid of inducing points is also considered, however, we show that these computational bottlenecks can be eliminated by identifying and exploiting further structure of the resulting matrices.
The proposed approach leads to a highly scalable algorithm which we call GP-GRIEF~(Gaussian Processes with Grid-Structured Eigenfunctions).
After an initial setup cost of \order{np^2 + dnp + dm^{3/d}}, GP-GRIEF requires only \order{p} time and \order{p} memory per log marginal likelihood evaluation, where 
$d$ denotes the dataset dimensionality, and
$p$ is the number of eigenfunctions that we will describe next.
We emphasize that our complexity is practically \emph{independent} of $m$, which can generally be set arbitrarily high.

GP-GRIEF approximates an exact kernel as a finite sum of eigenfunctions 
which we accurately compute using the \nystrom\ approximation conditioned on a huge number of inducing points.
In other words, our model is sparse in the kernel eigenfunctions rather than the number of inducing points, which can greatly exceed the size of the training set due to the structure we introduce.
This is attractive since it is well-known that eigenfunctions produce the most compact representation among orthogonal basis functions.
Although the eigenfunctions used are approximate, we demonstrate convergence in the limit of large $m$.
Additionally, our ability to fill out the input space with inducing points enables accurate global approximations of the eigenfunctions, even at test locations far from the training data.
These basis functions also live in a reproducing kernel Hilbert space, unlike some other sparse GPs whose bases have a pre-specified form (e.g. \citet{quin_ssgp}).
We summarize our main contributions below
\begin{itemize}
\item We break the \emph{curse of dimensionality} incurred by placing inducing points on a full Cartesian product grid.
Typically, a grid of inducing points results in a computational complexity that scales exponentially in $d$, however, we reduce this complexity to \emph{linear} in $d$ by exploiting algebraic properties of Kronecker and Khatri-Rao products. 
%The proposed techniques remain computationally attractive and stable even in high-dimensions.
%
\item \lesslines We practically eliminate dependence of the inducing point quantity, $m$, on computational complexity.
This allows us to choose $m \gg n$ to provide a highly accurate kernel approximation, even at locations far from the training data.
\item We show that the \nystrom\ eigenfunction approximation becomes exact for large $m$, which is achievable thanks to the structure and algebra we introduce.
\item Applications of the developed algebra are discussed to enable the extension of structured kernel interpolation methods for high-dimensional problems.
We also develop an efficient preconditioner for general kernel matrices.
%and applications enabling efficient kernel matrix preconditioning.
%
\item We discuss a flexible parametrization of the GRIEF kernel through a re-weighting of the kernel eigenfunctions.
This admits computation of the log marginal likelihood, along with all $p+1$ hyperparameter derivatives in \order{p}.
\item Finally, we demonstrate type-I Bayesian inference on real-world datasets with up to 2~million training points and $m=10^{33}$ inducing points. 
\end{itemize}
We begin with a review of GPs in \cref{sec:GP}, and we outline an eigenfunction kernel approximation in \cref{sec:kernel}.
\Cref{sec:structure} demonstrates why we should use many inducing points and subsequently develops the algebra necessary to make $m \gg n$ efficient and stable.
\Cref{sec:mercer_expansion} outlines a kernel reparameterization that enables efficient type-I Bayesian inference, even for large datasets.
We finish with numerical studies in \cref{sec:experiments}, demonstrating the performance of GP-GRIEF on real-world datasets.

\section{Background on Gaussian Processes}
\label{sec:background}
\label{sec:GP}
We will employ Gaussian processes (GPs) as non-parametric prior distributions over the latent function which generated the training dataset. 
It is assumed that the dataset is corrupted by independent Gaussian noise with variance $\sigma^2 \geq 0$
and that the latent function is drawn from a Gaussian process with zero mean and covariance determined by the kernel $k : \mathbb{R}^{d} \times \mathbb{R}^{d} \rightarrow \mathbb{R}$.
Considering a regression problem, the log marginal likelihood (LML) of the training targets, $\mat{y} \inR{n}$, can be written as
\begin{multline} \label{eqn:likelihood}
\log \mathcal{P}(\mat{y} | \mat{\theta}, \sigma^2, \mat{X}) = 
-\tfrac{1}{2} \log | \Kxx + \sigma^2 \I{n} |
-\\ \tfrac{1}{2} \mat{y}^T (\Kxx + \sigma^2 \I{n})^{-1} \mat{y} 
-\tfrac{n}{2} \log(2\pi),
\end{multline}
where 
$\mat{X} = \{\mat{x}_i \inR{d}\}_{i=1}^n$ is the set of $n$ training point input locations,
$[\K_\text{A,B}]_{i,j} = k(\mat{a}_i,\mat{b}_j)$ such that $\Kxx \inR{n \times n}$ is the kernel covariance matrix evaluated on the training dataset, and
we assume the kernel is parametrized by the hyperparameters, $\mat{\theta}$.
If we consider type II Bayesian inference, we would like to select the hyperparameters $\{\sigma^2, \mat{\theta}\}$ that maximize the LML.
After hyperparameter estimation, inference can be carried out at an untried point, $\mat{x}_* \inR{d}$, giving the posterior distribution of the prediction $y_* \inR{}$
%\\ \vspace{-\baselineskip}
\begin{multline} \label{eqn:posterior}
y_*| \mat{\theta}, \sigma^2, \mat{X}, \mat{x}_* \sim \mathcal{N}\left(\mathbb{E}[y_*],\ \mathbb{V}[y_*]\right),\\ 
\begin{split}
\mathbb{E}[y_*] &= \mat{K}_{\text{x}_*,\text{X}} (\Kxx + \sigma^2 \I{n})^{-1} \mat{y},\\
\mathbb{V}[y_*] &= \mat{K}_{\text{x}_*,\text{x}_*} - \mat{K}_{\text{x}_*,\text{X}}(\Kxx + \sigma^2 \I{n})^{-1}\mat{K}_{\text{X},\text{x}_*}.
\end{split}
\end{multline}
If we take a fully Bayesian (type I) approach, then we integrate out the hyperparameters by considering the hyperparameter posterior.
This generally results in analytically intractable integrals which require techniques such as Markov-Chain Monte Carlo (MCMC) sampling, or one of its variants~\cite{neal_mcmc_gp}.

\section{Eigenfunction Kernel Approximation}
\label{sec:kernel}
We consider a compact representation of the GP prior using a truncated Mercer expansion of the kernel~$k$.
We use the first $p$ eigenfunctions which we approximate numerically using a \nystrom\ approximation~\cite{peng_eigengp}
\begin{equation}
\begin{split} \label{eqn:nystrom_kernel}
\widetilde{k}(\x,\z) 
&= \sum_{i=1}^p 
			   \big( \underbrace{\lambda_i^{{-}\frac{1}{2}} \K_{\x,\text{U}} \mat{q}_i}_{\phi_i(\x)} \big)
               \big( \underbrace{\lambda_i^{{-}\frac{1}{2}} \K_{\z,\text{U}} \mat{q}_i}_{\phi_i(\z)} \big)\\
&= \K_{\x,\text{U}} \mat{Q} \S_p^T \mat{\Lambda}_p^{-1}\S_p\mat{Q}^T \K_{\text{U},\z}
\approx k(\x,\z),
\end{split}
\end{equation}
where
$\text{U} = \{\mat{u}_i \inR{d}\}_{i=1}^m$ refers to the set of $m$ inducing point locations;
$\mat{\Lambda}, \Q \inR{m\times m}$ are diagonal and unitary matrices containing the eigenvalues and eigenvectors of $\K_{\text{U},\text{U}}$, respectively;
$\lambda_i$ and $\mat{q}_i$ denote the $i$th largest eigenvalue and corresponding eigenvector of $\K_{\text{U},\text{U}}$, respectively; 
$\S_p \inR{p \times m}$ is a sparse selection matrix where $\S_p(i,:)$ contains one value set to unity in the column corresponding to the index of the $i$th largest value on the diagonal of $\mat{\Lambda}$; and
we use the shorthand notation $\mat{\Lambda}_p = \S_p \mat{\Lambda} \S_p^T = \text{diag} (\mat{\lambda}_p) \inR{p\times p}$ to denote a diagonal matrix containing the $p$ largest eigenvalues of $\K_\text{U,U}$, sorted in descending order.
$\phi_i(\mat{x})$ is the numerical approximation of the $i$th eigenfunction evaluated at the input $\mat{x}$, scaled by the root of the $i$th eigenvalue.
We only explicitly compute this scaled eigenfunction for numerical stability, as we will discuss later.
Using the kernel $\widetilde{k}$, the prior covariance matrix on the training set becomes
\begin{equation} \label{eqn:cov}
\widetilde{\K}_\text{X,X} = 
\underbrace{\K_{\text{X},\text{U}} \mat{Q} \S_p^T \mat{\Lambda}_p^{{-}\frac{1}{2}}}_{\mat{\Phi}} 
\underbrace{\mat{\Lambda}_p^{{-}\frac{1}{2}} \S_p\mat{Q}^T \K_{\text{U},\text{X}}}_{\mat{\Phi}^T},
\end{equation}
where the columns of $\mat{\Phi} \inR{n \times p}$ contain the $p$ scaled eigenfunctions of our kernel evaluated on the training set.
Observe that if U is randomly sampled from X, then $\widetilde{\K}_\text{X,X}$ is the same covariance matrix from the ``\nystrom\ method'' of \citet{williams_nystrom},
however, since we have replaced the kernel and not just the covariance matrix, we recover a valid probabilistic model~\cite{peng_eigengp}.

%\begin{comment}
While $\widetilde{\K}_\text{X,X}$ has a rank of at most $p$, \citet{peng_eigengp} show how a correction can be added to $\widetilde{k}$ (\cref{eqn:nystrom_kernel}) to give a full rank covariance matrix (provided $k$ does also).
The resulting GP will be non-degenerate.
We can write this correction as
$
%\widetilde{k}_\text{cor}(\x,\z) = \widetilde{k}(\x,\z) + 
\delta(\x-\z)(k(\x,\z) - \widetilde{k}(\x,\z)),
$
where $\delta(a) = 1$ if $a=0$, else 0. 
This correction term does not affect the computational complexity of GP training, however, we find it does not generally improve performance over the unmodified $\widetilde{k}$.
We do not consider this correction in further discussion.
%\end{comment}

\section{Grid-Structured Eigenfunctions (GRIEF)}
\label{sec:structure}
Previous work employing \nystrom\ approximations in kernel methods require $m$ to be small (often $\ll n$) to yield computational benefits.
As a result, the choice of inducing point locations, $\text{U}$, has a great influence on the approximation accuracy, and many techniques have been proposed to choose U effectively~\cite{smola_greedy_nystrom, drineas_nystrom, zhang_nystrom, belabbas_nystrom, kumar_nystrom_sampling, wang_nystrom, gittens_nystrom, li_nystrom, musco_leverage_nystrom}. % this list is from pg 2 of musco_leverage_nystrom
In this work, we would instead like to use \emph{so many} inducing points that carefully optimizing the distribution of $\text{U}$ is unnecessary. 
We will even consider $m \gg n$.
The following result shows how an eigenfunction approximation can be improved by using many inducing points.
%Assuming that the $i$th eigenvalue of $k$ is simple and non-zero, and that $\text{U} \supset \text{X}$, the following result holds.
\begin{theorem} \label{thm:nystrom_m_converg}
If the $i$th eigenvalue of $k$ is simple and non-zero and $\text{U} \supset \text{X}$,
a \nystrom\ approximation of the $i$th kernel eigenfunction converges in the limit of large $m$,
\begin{equation}
\mat{q}_i^{(n)} = \lim\limits_{m \rightarrow \infty} \sqrt{\frac{m}{n}} \frac{1}{\lambda_i^{(m)}} \mat{K}_\text{X,U} \mat{q}^{(m)}_i,
\end{equation}
where 
$\lambda_i^{(m)} \inR{}$ and $\mat{q}^{(m)}_i \inR{m}$ are the $i$th largest eigenvalue and corresponding eigenvector of $\mat{K}_\text{U,U}$, respectively. 
$\mat{q}^{(n)}_i$ is the kernel eigenfunction corresponding to the $i$th largest eigenvalue, evaluated on the set X.
\end{theorem}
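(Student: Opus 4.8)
The plan is to split the argument into an elementary algebraic reduction that uses the containment $\text{U}\supset\text{X}$, followed by an appeal to the classical convergence theory for the \nystrom\ (quadrature) discretisation of an integral eigenproblem. The containment hypothesis does almost all of the work in the first half, collapsing the somewhat intimidating right-hand side into a mere restriction of the inducing-point eigenvector.

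First I would order the inducing points so that the $n$ training inputs occupy the first $n$ rows of $\mat{K}_\text{U,U}$, which is permissible precisely because $\text{U}\supset\text{X}$. Then $\mat{K}_\text{X,U}$ is exactly the top $n$ rows of $\mat{K}_\text{U,U}$, so for the eigenvector $\mat{q}^{(m)}_i$ we obtain $\mat{K}_\text{X,U}\mat{q}^{(m)}_i = \big(\mat{K}_\text{U,U}\mat{q}^{(m)}_i\big)_{1:n} = \lambda_i^{(m)}\,\big(\mat{q}^{(m)}_i\big)_{1:n}$, i.e.\ the eigenvalue times the restriction of $\mat{q}^{(m)}_i$ to the training coordinates. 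The factor $1/\lambda_i^{(m)}$ then cancels exactly, where the hypothesis $\lambda_i\neq 0$ guarantees $\lambda_i^{(m)}\neq0$ for large $m$, so the claim reduces to showing $\mat{q}^{(n)}_i = \lim_{m\to\infty}\sqrt{m/n}\,\big(\mat{q}^{(m)}_i\big)_{1:n}$.

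Next I would pass to the continuous integral operator. As the grid of inducing points densifies, the equal-weight quadrature $\tfrac1m\sum_j f(\mat{u}_j)$ converges to integration against the limiting (uniform) measure on the input domain, so $\tfrac1m\mat{K}_\text{U,U}$ is a \nystrom\ discretisation of the associated integral operator $\mathcal{T}$. Invoking the standard spectral-convergence result for this discretisation, the scaled eigenvector $\sqrt{m}\,\mat{q}^{(m)}_i$ converges pointwise to the $L^2$-normalised eigenfunction $\psi_i$ of $\mathcal{T}$ sampled at the corresponding points; simplicity of the $i$th eigenvalue is exactly what pins down a single limiting eigenfunction (up to sign) rather than an arbitrary rotation inside a degenerate eigenspace. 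Restricting this convergence to the fixed training coordinates gives $\sqrt{m}\,\big(\mat{q}^{(m)}_i\big)_{1:n}\to\psi_i(\text{X})$, whence $\sqrt{m/n}\,\big(\mat{q}^{(m)}_i\big)_{1:n}\to\tfrac{1}{\sqrt{n}}\psi_i(\text{X}) = \mat{q}^{(n)}_i$, which matches the stated normalisation of the kernel eigenfunction evaluated on X.

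I expect the genuine obstacle to be the spectral-convergence step rather than the algebra: establishing that the \emph{eigenvectors} of $\tfrac1m\mat{K}_\text{U,U}$ (and not merely the eigenvalues) converge to the eigenfunctions of $\mathcal{T}$ requires collectively-compact or Galerkin-type operator-convergence arguments (as in Baker, Atkinson, or the analysis underlying \citet{williams_nystrom}), together with careful bookkeeping of the $\sqrt{m}$ normalisation and the sign ambiguity. The two standing hypotheses then slot in cleanly: non-zero $\lambda_i$ keeps the reciprocal well-defined and places $\psi_i$ in the range of $\mathcal{T}$, while simplicity secures convergence of the individual eigenprojection.
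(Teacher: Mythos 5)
Your proof is correct (modulo the classical-analysis details you flag at the end), but it is a genuinely different argument from the paper's. The paper runs the \nystrom\ machinery in the opposite direction: it invokes theorem 3.5 of \citet{baker_nystrom} to assert that, as $m\to\infty$, the eigenvector of $\Kuu$ equals the \nystrom\ extension \emph{from X to U}, $\mat{q}_i^{(m)} = \sqrt{n/m}\,\big(\lambda_i^{(n)}\big)^{-1}\Kux\mat{q}_i^{(n)}$; it then multiplies both sides by $\Kxu\Kuu^{-1}$, uses $\text{U}\supset\text{X}$ only to collapse $\Kxu\Kuu^{-1}\Kux$ into $\Kxx$ (via $\Kxu=\S_n\Kuu$), and closes with the two eigenvector identities $\Kuu^{-1}\mat{q}_i^{(m)}=\big(\lambda_i^{(m)}\big)^{-1}\mat{q}_i^{(m)}$ and $\Kxx\mat{q}_i^{(n)}=\lambda_i^{(n)}\mat{q}_i^{(n)}$. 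You instead spend the containment hypothesis at the very start, noting $\Kxu\mat{q}_i^{(m)}=\S_n\Kuu\mat{q}_i^{(m)}=\lambda_i^{(m)}\big(\mat{q}_i^{(m)}\big)_{1:n}$, which cancels $\lambda_i^{(m)}$ outright and reduces the theorem to spectral convergence of the U-discretization itself, i.e.\ $\sqrt{m}\,\mat{q}_i^{(m)}\to\psi_i(\text{U})$ pointwise. This buys two things: the operator-convergence theorem is applied in the direction where the $m\to\infty$ limit actually does the work (it is the quadrature on U that densifies, whereas the paper's X-based extension is built from a \emph{fixed} $n$-point rule, so letting $m\to\infty$ does not by itself control the error in that step), and it exposes the real content of the right-hand side --- it is nothing more than $\sqrt{m/n}$ times the restriction of the $\Kuu$-eigenvector to the training coordinates. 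What the paper's route buys is that it never leaves matrix algebra and never names the integral operator; the price is that it must read $\mat{q}_i^{(n)}$ as an exact eigenvector of $\Kxx$ (that is how $\Kxx\mat{q}_i^{(n)}=\lambda_i^{(n)}\mat{q}_i^{(n)}$ enters), silently identifying ``the eigenfunction evaluated on X'' with the $n$-point discrete eigenvector, whereas your limit is the true eigenfunction sample $\tfrac{1}{\sqrt{n}}\psi_i(\text{X})$, the more literal reading of the theorem statement. Both arguments carry the same residual analytic burden --- simplicity of $\lambda_i$ to pin down a one-dimensional eigenspace, a sign convention, and the assumption that the inducing grid's empirical measure converges to the measure defining the eigenfunctions --- which you acknowledge explicitly and the paper leaves implicit.
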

\begin{proof}
We begin by constructing a \nystrom\ approximation of the eigenfunction evaluated on U, using X as inducing points.
From theorem 3.5 of \cite{baker_nystrom}, as $m \rightarrow \infty$, 
\begin{equation}
\mat{q}_i^{(m)} = \sqrt{\frac{n}{m}} \frac{1}{\lambda_i^{(n)}} \mat{K}_\text{U,X} \mat{q}^{(n)}_i,
\end{equation}
where we assume that the $i$th eigenvalue of $k$ is simple and non-zero.
Multiplying both sides by $\Kxu \Kuu^{-1}$,
\begin{align}
\Kxu \Kuu^{-1} \mat{q}_i^{(m)} &= \sqrt{\frac{n}{m}} \frac{1}{\lambda_i^{(n)}} \Kxu \Kuu^{-1}\mat{K}_\text{U,X} \mat{q}^{(n)}_i.
\end{align}
Since $\text{U} \supset \text{X}$, $\Kxu = \S_n \Kuu$ is a subset of the rows of $\Kuu$, where $\S_n \inR{n \times m}$ is a selection matrix.
We can write
$\Kxu \Kuu^{-1} \Kux = \S_n \Kuu \Kuu^{-1} \Kuu\S_n^T = \S_n \Kuu\S_n^T = \Kxx$. 
Additionally, since the eigenvector $\mat{q}_i^{(m)}$ satisfies,
$\Kuu^{-1} \mat{q}_i^{(m)} = \frac{1}{\lambda_i^{(m)}}\mat{q}_i^{(m)}$, we get
\begin{align}
\frac{1}{\lambda_i^{(m)}}\Kxu \mat{q}_i^{(m)} &= \sqrt{\frac{n}{m}} \frac{1}{\lambda_i^{(n)}} \Kxx \mat{q}^{(n)}_i.
\end{align}
Noting that
$\Kxx \mat{q}^{(n)}_i = \lambda_i^{(n)} \mat{q}^{(n)}_i$
completes the proof.
\end{proof}

\lesslines
%It is difficult to succinctly summarize a similar result for the case of multiple eigenvalues, so we omit this for brevity.
For multiple eigenvalues, it can similarly be shown that the $i$th approximated eigenfunction converges to lie within the linear space of eigenfunctions corresponding to the $i$th eigenvalue of $k$ as $m\rightarrow \infty$.
%This following theorem 3.6 of \cite{baker_nystrom}.

We can use a large $m$ by distributing inducing points on a Cartesian tensor product grid%
\footnote{
%Sampling $\text{U}$ on a grid implies that the input data distribution factorizes between input dimensions which we find gives a 
\lesslines
We want $\text{U}$ to be sampled from the same distribution as the training data. 
Approximating the data distribution by placing $\text{U}$ on a grid is easy to do by various means as a quick preprocessing step.
%We find this approximation very useful even though it implicitly assumes the distribution factorizes between inputs.
%This implicitly makes the assumption the at the data distribution factorizes between inputs.
}.
\citet{saatci_phd} demonstrated efficient GP inference when training points are distributed in this way by exploiting Kronecker matrix algebra.
We will assume this grid structure for our inducing points, i.e. U will form a grid.
If the covariance kernel satisfies the product correlation rule (as many popular multidimensional kernels do), i.e.
$k(\mat{x},\mat{z}) = \prod_{i=1}^d k_i(x_i,z_i)$, then $\Kuu \inR{m \times m}$ inherits the Kronecker product form $\K_\text{U,U} {=} \bigotimes_{i=1}^d \K_\text{U,U}^{(i)}$, where $\otimes$ is the Kronecker product~\cite{van_loan_kron}.
$\K_\text{U,U}^{(i)} \inR{\widebar{m} \times \widebar{m}}$ are one-dimensional kernel covariance matrices for a 
slice of the input space grid along the $i$th dimension, and 
$\widebar{m} {=} \sqrt[d]{m} {\approx}$\order{10} is the number of inducing points we choose along each dimension of the full grid.
It is evident that the Kronecker product leads to a large, expansed matrix from smaller ones,
therefore, it is very advantageous to manipulate and store these small matrices without ``unpacking'' them, or explicitly computing the Kronecker product.
Exploiting this structure decreases the storage of $\K_\text{U,U}$ from  
$\order{m^2} \rightarrow \order{dm^{2/d}}{=}\order{d\widebar{m}^{2}}$,
and the cost of matrix-vector products with $\Kuu$ from 
$\order{m^2} {\rightarrow} \order{dm^{(d{+}1)/d}}{=}\order{d\widebar{m}^{d+1}}$.
Additionally, the cost of the eigen-decomposition of  $\K_\text{U,U}$ decreases from 
$\order{m^3} \rightarrow \order{dm^{3/d}}{=}\order{d\widebar{m}^{3}}$, and the eigenvector matrix
$\Q = \bigotimes_{i=1}^d \Q^{(i)}$ and eigenvalue matrix
$\mat{\Lambda} = \text{diag}\big(\bigotimes_{i=1}^d \mat{\lambda}^{(i)}\big)$ both inherit a Kronecker product structure, enabling matrix-vector products with $\widetilde{\K}_\text{X,X}$ in \order{d\widebar{m}^{d+1}} operations~\cite{van_loan_kron, saatci_phd}.

In low-dimensions, exploiting the Kronecker product structure of $\K_\text{U,U} {=} \bigotimes_{i=1}^d \K_\text{U,U}^{(i)}$ can be greatly advantageous, however, we can immediately see from the above complexities that the cost of matrix-vector products%
\footnote{We assume that a conjugate gradient method would be employed for GP training requiring matrix-vector products. Alternative formulations would require columns of $\mat{Q}= \bigotimes_{i=1}^d \Q^{(i)}$ to be expanded which similarly scales exponentially~(\order{\widebar{m}^d}).}
with $\widetilde{\K}_\text{X,X}$ increases \emph{exponentially} in $d$.
The storage requirements will similarly increase exponentially since a vector of length $m{=}\widebar{m}^d$ needs to be stored when a matrix-vector product is made with 
$\Q = \bigotimes_{i=1}^d \Q^{(i)}$, and 
$\K_\text{X,U}$ requires \order{\widebar{m}^dn} storage. 
This poor scaling poses a serious impediment to the application of this approach to high-dimensional datasets.

We now show how to massively decrease time and storage requirements from exponential to \emph{linear} in $d$ by identifying further matrix structure in our problem.

We begin by identifying structure in the exact cross-covariance between train (or test) points and inducing points.
These matrices, e.g. $\K_\text{X,U}$, admit a row-partitioned Khatri-Rao product structure as follows~\cite{nickson_blitzkriging} 
\rowKhatriRao{\Kxu}{\Kxu}{eqn:cross_cov_kr}%
where $\ast$ is the Khatri-Rao product whose computation gives a block Kronecker product matrix~\cite{liu_kron}.
We will always mention how Khatri-Rao product blocks are partitioned.
Since $\K_\text{X,U}^{(i)}$ are only of size $n \times \widebar{m}$, the storage of $\K_\text{X,U}$ has decreased from exponential to linear in $d$: $\order{\widebar{m}^dn} \rightarrow \order{dn\widebar{m}}{\approx}\order{dn}$.
We also observe that the selection matrix $\S_p= \bigast_{i=1}^d \S^{(i)}_p$ can be written as a row-partitioned Khatri-Rao product matrix where each sub-matrix contains one non-zero per row.
Further, by exploiting both Kronecker and Khatri-Rao matrix algebra, our main result below shows that
$\K_{\text{X},\text{U}} \mat{Q} \S_p^T$ can be computed in \order{dnp} time.
This is a substantial reduction over the naive cost of \order{\widebar{m}^dnp} time.
\begin{theorem}
\label{thm:KRrowcol}
The product of a 
row-partitioned Khatri-Rao matrix $\K_\text{X,U}=\bigast_{i=1}^d \K_\text{X,U}^{(i)} \inR{n \times \widebar{m}^d}$, a
Kronecker product matrix $\Q=\bigotimes_{i=1}^d \Q^{(i)} \inR{\widebar{m}^d \times \widebar{m}^d}$, and a
column-partitioned Khatri-Rao matrix $\S_p^T = \bigast_{i=1}^d \big(\S^{(i)}_p\big)^T \inR{\widebar{m}^d \times p}$ can be computed as follows
\begin{align}\label{eqn:KRrowcol}
\K_\text{X,U} \Q \S_p^T =  
\bigodot \limits_{i=1}^d \K_\text{X,U}^{(i)} \Q^{(i)} \big(\S^{(i)}_p\big)^T,
\end{align}
where $\odot$ is the (element-wise) Hadamard product.
This computation only requires products of the smaller matrices
$\K_\text{X,U}^{(i)} \inR{n \times \widebar{m}}$,
$\Q^{(i)} \inR{\widebar{m} \times \widebar{m}}$ and 
$\S^{(i)}_p \inR{p \times \widebar{m}}$.
\end{theorem}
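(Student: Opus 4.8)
The plan is to verify the identity entrywise, computing the $(j,k)$ element of each side and showing they agree. The left-hand side is a single matrix product, so its $(j,k)$ entry is the scalar $\mat{r}_j\,\Q\,\mat{s}_k$, where $\mat{r}_j = \K_\text{X,U}(j,:)$ is the $j$th row of $\K_\text{X,U}$ and $\mat{s}_k = \S_p^T(:,k)$ is the $k$th column of $\S_p^T$. The crux is to exploit the two different partitionings: the row-partitioned Khatri-Rao structure of $\K_\text{X,U}$ (as in \cref{eqn:cross_cov_kr}) gives $\mat{r}_j = \bigotimes_{i=1}^d \K_\text{X,U}^{(i)}(j,:)$, a Kronecker product of $d$ row vectors, while the column-partitioned structure of $\S_p^T$ gives $\mat{s}_k = \bigotimes_{i=1}^d (\S^{(i)}_p)^T(:,k)$, a Kronecker product of $d$ column vectors.

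Next I would apply the mixed-product property of the Kronecker product, $(\A\otimes\B)(\C\otimes\D)=(\A\C)\otimes(\B\D)$, twice. Combining $\mat{r}_j$ with $\Q=\bigotimes_{i=1}^d\Q^{(i)}$ yields $\mat{r}_j\Q = \bigotimes_{i=1}^d\big(\K_\text{X,U}^{(i)}(j,:)\,\Q^{(i)}\big)$, still a Kronecker product of $d$ row vectors. Contracting this against $\mat{s}_k$, itself a Kronecker product of $d$ column vectors, the mixed-product property collapses the expression into a product of scalars, $\mat{r}_j\Q\mat{s}_k = \prod_{i=1}^d \big(\K_\text{X,U}^{(i)}(j,:)\,\Q^{(i)}\,(\S^{(i)}_p)^T(:,k)\big)$, since the Kronecker product of scalars is their ordinary product.

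To finish, I would observe that the $i$th scalar factor $\K_\text{X,U}^{(i)}(j,:)\,\Q^{(i)}\,(\S^{(i)}_p)^T(:,k)$ is precisely the $(j,k)$ entry of $\K_\text{X,U}^{(i)}\Q^{(i)}(\S^{(i)}_p)^T$. Hence the product over $i$ is, by definition, the $(j,k)$ entry of the Hadamard product $\bigodot_{i=1}^d \K_\text{X,U}^{(i)}\Q^{(i)}(\S^{(i)}_p)^T$. Since $j$ and $k$ were arbitrary, the two matrices coincide, establishing \cref{eqn:KRrowcol}.

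The main obstacle is bookkeeping rather than mathematical depth: I must keep the two partitionings straight (rows on the covariance side, columns on the selection side) and confirm that the mixed-product property applies here despite the factors being rectangular. It does, because each individual product $\K_\text{X,U}^{(i)}(j,:)\,\Q^{(i)}\,(\S^{(i)}_p)^T(:,k)$ is a well-defined scalar, so conformability is never an issue. A point worth stating carefully is that it is exactly the pairing of a \emph{row}-partitioned Khatri-Rao factor with a \emph{column}-partitioned one that makes each $(j,k)$ entry factor as a single product across dimensions, which is what allows the full product to reassemble as a Hadamard product of the small per-dimension matrices $\K_\text{X,U}^{(i)}\Q^{(i)}(\S^{(i)}_p)^T$.
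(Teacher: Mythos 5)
Your proof is correct and follows essentially the same route as the paper's: both reduce the identity to an entrywise computation in which a Kronecker-structured row vector is contracted against a Kronecker-structured column vector, yielding a product of per-dimension scalars that reassembles as a Hadamard product. The only cosmetic difference is that where the paper cites theorem 2 of \citet{liu_kron} to conclude that $\K_\text{X,U}\Q$ is again a row-partitioned Khatri-Rao matrix, you derive that step inline from the mixed-product property, making your argument self-contained.
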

\begin{proof}
First, 
observe that $\K_\text{X,U}\Q = \bigast_{i=1}^d \K^{(i)}_\text{X,U} \Q^{(i)} = \bigast_{i=1}^d \mat{R}^{(i)}$ is a row-partitioned Khatri-Rao product matrix using theorem 2 of \cite{liu_kron}.
Now we must compute a matrix product of row-~and~column-partitioned Khatri-Rao matrices.
We observe that each element of this matrix product is an inner product between two Kronecker product vectors, i.e.
$\big[\K_\text{X,U} \Q \S_p^T]_{ij}
= \big(\bigotimes_{l=1}^d \mat{R}^{(l)}(i,:) \big) \big(\bigotimes_{l=1}^d \mat{S}_p^{T\,(l)}(:,j) \big) 
= \prod_{l=1}^d \mat{R}^{(l)}(i,:)\mat{S}_p^{T\,(l)}(:,j)$.
Writing this in matrix form completes the proof.
% NOTE: This result seems to be a generalization of (25) in \cite{liu_kron}.
\end{proof}
If all 
%$\K_\text{X,U}^{(i)}$, $\Q^{(i)}$, $\big(\S^{(i)}_p\big)^T$
the sub-matrices were dense, \order{dn\widebar{m}p} time would be required to compute $\K_\text{X,U} \Q \S_p^T$, however,
since $\S_p$ is a sparse selection matrix with one non-zero per row, computation requires just \order{dn \max(p,\widebar{m}c)} ${\approx}$ \order{dnp} time.
We achieve this time by computing only the necessary columns of $\K_\text{X,U}^{(i)} \Q^{(i)}$ and avoiding redundant computations.
The constant $c$ is the average number of non-zeros columns in each of $\{\S_p^{(i)}\}_{i=1}^d$ which is typically \order{1}, however, may be $\widebar{m}$ in the worst case.
Evidently the time complexity is effectively \emph{independent} of the number of inducing points.
Even in the rare worst case where $c=\widebar{m}$ and $\widebar{m}^2 > p$, the scaling is extremely weak; \order{dnm^{{2}/{d}}}.

What results is a kernel composed of basis eigenfunctions that are accurately approximated on a grid of inducing points using a \nystrom\ approximation.
Although $m$ increases exponentially in $d$ given this inducing point structure,
the cost of GP training and inference is not affected. 
We call the resulting model GP-GRIEF (GP with GRId-structured EigenFunctions).
Completing the computations required for GP training and inference require straightforward application of the matrix-determinant and matrix-inversion lemmas which we demonstrate later in \cref{eqn:inv_det_lemmas}.

\paragraph{Eigenvalue Search}
\label{sec:eig_search}
What has not been addressed is how to form $\S_p= \bigast_{i=1}^d \S^{(i)}_p$ and compute $\mat{\lambda}_p$ efficiently.
This requires finding the index locations and values of the largest $p$ eigenvalues in a vector of length $m = \widebar{m}^d$.
In high-dimensions, this task is daunting considering $m$ can easily exceed the number of atoms in the observable universe. 
%, making the search for a needle in a haystack look like a joke.
Fortunately, the resulting vector of eigenvalues, $\text{diag}(\mat{\Lambda}) = \bigotimes_{i=1}^d \mat{\lambda}^{(i)}$, has a Kronecker product structure which we can exploit to develop a fast search algorithm that requires only \order{d\widebar{m}p} time.
To do this, we compute a truncated Kronecker product expansion by keeping only the $p$ largest values after each sequential Kronecker product such that only Kronecker products between length $p$ and length $\widebar{m}$ vectors are computed.
\Cref{alg:eig_search} outlines a more numerically stable version of this search strategy that computes the log of the eigenvalues and also demonstrates how $\mat{S}_p$ is computed.

\newcommand{\logeigs}{\ensuremath{\log \hspace{-0.6mm}\mat{\lambda}_p}}
\newcommand{\diag}{\ensuremath{\text{diag}}}
\newcommand{\idxs}{\ensuremath{\text{idxs}}}
\newcommand{\ord}{\ensuremath{\text{ord}}}
\begin{algorithm}
	\caption{
		Computes $\mat{S}_p$, and $\logeigs$ (the log of the $p$ largest eigenvalues of $\Kuu$).
		We use zero-based array indexing,
		$\text{mod}(a,b)$ computes $a$ mod $b$,
		$|\mat{a}|$ computes the length of $\mat{a}$,
		$\text{sort}_b(\mat{a})$ returns the $\min (|\mat{a}|,b)$ largest elements of $\mat{a}$ in descending order, as well as the indices of these elements in $\mat{a}$, and
		$\lfloor \mat{a} \rfloor$ computes the floor of the elements in $\mat{a}$.
	}
	\label{alg:eig_search}
	\begin{algorithmic}
		\STATE {\bfseries Input:} $\{\mat{\lambda}^{(i)} \inR{\widebar{m}}\}_{i=1}^d$
		\STATE {\bfseries Output:} $\{\mat{S}_p^{(i)}\inR{p \times \widebar{m}}\}_{i=1}^d \quad \& \quad \logeigs \inR{p}$
		\STATE $\logeigs,\ \idxs = \text{sort}_p\big(\log(\mat{\lambda}^{(1)})\big)$
		\FOR{$i=2$ {\bfseries to} $d$} 
		\STATE \mbox{$\logeigs,\ord = \text{sort}_p\big(\logeigs {\otimes} \mat{1}_{\widebar{m}} {+} \mat{1}_{|\logeigs|} \otimes  \log(\mat{\lambda}^{(i)})\big)$}
		\STATE $\idxs = \Big[\begin{array}{cc} \idxs\big(\lfloor \ord / \widebar{m} \rfloor,: \big), & \text{mod}(\ord, \widebar{m})\end{array} \Big]$
		\ENDFOR
		\STATE $\big\{\mat{S}_p^{(i)} = \I{\widebar{m}}\big(\idxs(:,i{-}1),:\big)\big\}_{i=1}^d$
	\end{algorithmic}
\end{algorithm}

\paragraph{Computation in High Dimensions}
Direct use of \cref{thm:KRrowcol} may lead to finite-precision rounding inaccuracies and overflow errors in high dimensions because of the Hadamard product over $d$ matrices.
We can write a more numerically stable version of this algorithm by taking the log of \cref{eqn:KRrowcol}, allowing us to write the computation as a sum of $d$ matrices, rather than a product
\begin{align*}
\K_\text{X,U} \Q \S_p^T =
\bigodot \limits_{i=1}^d \text{sign}( \mat{B}^{(i)} )
\odot
\exp \bigg(\sum \limits_{i=1}^d \log(  \text{abs}\ \mat{B}^{(i)})\bigg),
%\bigg(\bigodot \limits_{i=1}^d \text{sign}\Big( \K_\text{X,U}^{(i)} \Q^{(i)} \big(\S^{(i)}_p\big)^T \Big)\bigg)
%\odot  \\
%\exp \bigg(\sum \limits_{i=1}^d \log \Big[ \text{abs}\Big( \K_\text{X,U}^{(i)} \Q^{(i)} \big(\S^{(i)}_p\big)^T \Big)\Big]\bigg).
\end{align*}
where $\mat{B}^{(i)} = \K_\text{X,U}^{(i)} \Q^{(i)} \big(\S^{(i)}_p\big)^T$, and exp, log are computed element-wise.
While the sign matrix is the Hadamard product of $d$ matrices, it contains only $\{-1,0,1\}$ so it is not susceptible to numerical issues.
Also, when the sign of an element is zero, we do not compute the log.
Unfortunately, the exp computation can still lead to numerical issues, however, $\mat{\Phi}$ suffers less because of the rescaling provided by the eigenvalues (i.e. elements of $\mat{\Phi}$ are the quotient of possibly very large or small values).
Since all eigenvalues are positive, we can stably compute $\mat{\Phi}$ as follows
\begin{align*}
&\mat{\Phi} 
= \big(\K_\text{X,U} \Q \S_p^T\big) \mat{\Lambda}_p^{-\frac{1}{2}}
=\\
&\bigodot \limits_{i=1}^d \text{sign}( \mat{B}^{(i)} )
\odot
\exp \bigg(
\sum \limits_{i=1}^d \log(  \text{abs}\ \mat{B}^{(i)})
-
\frac{1}{2}\mat{1}_n\,\logeigs^T \bigg),
\end{align*}
where \logeigs\ is computed by \cref{alg:eig_search}.

\subsection{Preconditioning Applications}
\label{sec:precon}
As an aside remark, we discuss the application of $(\widetilde{\K}_\text{X,X}+\sigma^2 \I{n})^{-1}$ as a preconditioner for the exact kernel matrix $\Kxx + \sigma^2 \I{n}$ in moderately sized problems where \order{n^2} storage is not prohibitive.
The use of $\widetilde{\K}_\text{X,X}$ for matrix preconditioning was explored with notable empirical success by \citet{cutajar_preconditioning} where a sub-set of training data was used as inducing points giving $\text{U} \subset \text{X}$ and $m<n$.
By \cref{thm:nystrom_m_converg}, we know that the \nystrom\ approximation converges for large $m$, and we have shown that we can accommodate $m \gg n$ to provide an accurate low-rank kernel matrix approximation.

\subsection{SKI Applications}
\label{sec:ski}
As a further aside, we discuss how the developed algebra can be applied in a general kernel interpolation setting.
\citet{wilson_kiss} introduced a kernel interpolation perspective to unify inducing point methods wherein the kernel is interpolated from a covariance matrix on the inducing points.
For instance, the subset of regressors (SoR) method \cite{silverman_sor,quinonero_sparse_gpm} can be viewed as a zero-mean GP interpolant of the true kernel while 
\citet{wilson_kiss} proposed a sparse approximate interpolant.
We can denote the interpolated covariance matrix as $\mat{E}\Kuu \mat{E}^T$, where $\mat{E} \inR{n\times m}$ is the interpolation matrix.

%\lesslines
In structured kernel interpolation (SKI) the inducing points form a grid such that $\Kuu$ inherits a Kronecker product form.
This can provide dramatic computational advantages, however, SKI suffers from the exponential scaling discussed earlier and so is recommended only for very low-dimensional problems, $d \leq 5$ \cite{wilson_deep}.
We observe that in the case of the GP interpolant (e.g. SoR), as well as the sparse 
%local polynomial Lagrange 
interpolant suggested by \citet{wilson_kiss}, the interpolation matrix $\mat{E}$ inherits a row-partitioned Khatri-Rao structure.
This enables direct use of \cref{thm:KRrowcol} to reduce the exponential scaling in $d$ to a linear scaling, and allows SKI to scale to high-dimensional problems.
However, time complexity would scale quadratically in $n$, unlike the proposed GRIEF methods.
%In contrast, the proposed GRIEF approach scales linearly in $n$.

\section{Re-weighted Eigenfunction Kernel}
\label{sec:mercer_expansion}
We can approximately recover a wide class of kernels by modifying the weights associated with the kernel eigenfunctions~\cite{buhmann_rbf}.
Here we consider this flexible kernel parametrization for the GRIEF kernel.
Extending \cref{eqn:nystrom_kernel}, we can write the re-weighted GRIEF kernel as
\begin{align} \label{eqn:reweighted_kernel}
\widetilde{k}(\x,\z) 
&= \sum_{i=1}^p 
w_i \phi_i(\x) \phi_i(\z),
%\\
%&= \K_{\x,\text{U}} \mat{Q} \S_p^T \mat{\Lambda}_p^{-\frac{1}{2}} \mat{W} \mat{\Lambda}_p^{-\frac{1}{2}} \S_p\mat{Q}^T \K_{\text{U},\z},
\end{align}
where
$\mat{W} \inR{p \times p} = \text{diag}(\mat{w})$,
$\mat{w} = \{w_i > 0\}_{i=1}^p$ are the eigenfunction weights.
The covariance matrix then becomes
\begin{equation} \label{eqn:reweighted_cov}
\widetilde{\K}_\text{X,X} = 
\mat{\Phi} 
\mat{W}
\mat{\Phi}^T,
\end{equation}
and the full set of hyperparameters is $\{\sigma^2, \mat{w}, \mat{\theta}\}$, however, we choose to fix $\mat{\theta}$ so that $\mat{\Phi}$ remains constant.
We can then compute the log marginal likelihood (LML) and all $p+1$ derivatives with respect to $\{\sigma^2,\mat{w}\}$ in \order{p} which is \emph{independent} of $n$.
To do this, we first assume that
$\mat{y}^T\mat{y} \inR{}$,
$\mat{\Phi}^T\mat{y} = \mat{r} \inR{p}$, and
$\mat{\Phi}^T\mat{\Phi}=\mat{A} \inR{p\times p}$
are precomputed, which requires \order{np^2+dnp+d\widebar{m}^3} time, however, this step only needs to be done once before LML iterations begin.
Then, to compute the LML (\cref{eqn:likelihood}), we use the matrix inversion and determinant lemmas to give
\begin{align} \label{eqn:inv_det_lemmas}
&\mat{y}^T \big(\widetilde{\mat{K}}_\text{X,X} + \sigma^2\mat{I}_n\big)^{-1} \mat{y} = 
\sigma^{-2}\big(
\mat{y}^T\mat{y} - \mat{r}^T\mat{P}^{-1}\mat{r}
\big),
%\ \text{and}
\\ \nonumber
&\log \big|\widetilde{\mat{K}}_\text{X,X} + \sigma^2\mat{I}_n\big| =
\log \big|\mat{P}\big| +
\sum \limits_{i=1}^p\log w_i +
(n{-}p)\log \sigma^2,
\end{align}
where $\mat{P} \inR{p \times p} = \sigma^2 \mat{W}^{{-}1} \hspace{-1mm}+ \mat{A}$.
Using these relations, the LML can be computed within \order{p^3} time.
The LML derivatives with respect to all hyperparameters can also be computed in \order{p^3} as shown in the following expression which is derived in \cref{sec:typeI_derivatives} of the supplement
\begin{align*}
&\hspace{-1.7mm}\frac{\partial \text{LML}}{\partial \mat{w}}
{=} \frac{\big(\mat{r} - \mat{A}\mat{P}^{-1}\mat{r}\big)^2}{2\sigma^{4}}
- \frac{\text{diag}\big(\mat{A}\big) - \big(\mat{A} \odot \mat{P}^{-1}\mat{A}\big)^T \mat{1}_p}{2\sigma^{2}},\\
&\hspace{-1.7mm}\frac{\partial \text{LML}}{\partial \sigma^2} 
{=} \frac{\y^T\y {-} 2\mat{r}^T\mat{P}^{-1}\mat{r} {+} \mat{r}^T\mat{P}^{-1}\mat{A}\mat{P}^{-1}\mat{r}}{2\sigma^{4}}
{-} \frac{n {-} \text{Tr}\big(\mat{P}^{-1} \mat{A}\big)}{2\sigma^{2}}.
\end{align*}
To further reduce the per-iteration computational complexity from $\order{p^3} \rightarrow \order{p}$
we apply a linear transformation to the basis functions to make them mutually orthogonal when evaluated on the training data.
We can write the $i$th transformed basis function as
$
\widetilde{\phi}_i(\x) = \sum_{j=1}^p \widetilde{v}_{ji} \widetilde{\Sigma}_{ii}^{-1} \phi_i(\x),
$
where 
%$\widetilde{\phi}_i(\x)$ is the $i$th transformed basis function,
$\widetilde{\mat{\Sigma}} \in \mathbb{R}^{\widetilde{p} \times \widetilde{p}}$ is a diagonal matrix containing the non-zero singular values of $\mPhi$,
$\widetilde{\V} \in \mathbb{R}^{p \times \widetilde{p}}$ contains the corresponding right-singular vectors of $\mPhi$, and 
$\widetilde{p} \leq \min(p, n)$.
Using these transformed basis functions, both $\mat{A} = \mat{I}_{\widetilde{p}}$ and $\mat{P}= \sigma^2 \mat{W}^{{-}1} \hspace{-1mm}+ \mat{A}$ become diagonal matrices, enabling evaluation of the LML and all derivatives in \order{p} using \cref{eqn:inv_det_lemmas} and the derivative expressions above.
The transformation requires the singular-value decomposition of $\mPhi$ before LML iterations, however, this precomputation is no more expensive than those discussed previously at \order{n\widetilde{p}^2}.

Since the kernel is now heavily parametrized, maximizing the LML for type-II Bayesian inference is susceptible to overfitting.
Instead, we may choose to take a fully Bayesian type-I approach and integrate out the hyperparameters using hybrid MCMC sampling.
This type-I approach requires far more LML evaluations then type-II (typically $\order{10^5}$), however, the fast \order{p} (or \order{p^3}) evaluations make this tractable even for very large problems
since the cost per iteration is \emph{independent} of the number of training points.
In \cref{sec:typeI_extensions} of the supplement we discuss further extensions to this type-I inference procedure.

\section{Experimental Studies}
\label{sec:experiments}
\paragraph{Two-Dimensional Visualization}
\lesslines
\Cref{fig:2d} shows a comparison between GP-GRIEF and the 
Variational Free Energy (VFE) inducing point approximation~\cite{titsias_vfe} on a two-dimensional test problem with $n=10$ training points generated by the function $f(\mat{x}) {=} \sin(x_1)\sin(x_2)$ and corrupted with $\mathcal{N}(0,0.1)$ noise.
For both models, we use a squared-exponential base kernel, and we estimate the kernel lengthscale and noise variance, $\sigma^2$, by maximizing the log marginal likelihood.
VFE can also select its inducing point locations.
In this study, we do not consider optimizing the GRIEF weights, $\mat{w}$.
VFE with $m=4$ inducing points achieves a root-mean squared error (RMSE) of 0.47 on the test set, whereas
GP-GRIEF with the same number of basis functions%
\footnote{Technically, VFE gives an infinite basis function expansion through a correction term, however, we will assume $p=m=4$.},
$p=4$, achieves an RMSE of 0.34, identical to the reconstruction provided by a full GP using the exact kernel.
While GP-GRIEF uses a dense grid of $m=25$ inducing points, 
it has a computational complexity equivalent to VFE.
This demonstrates the reconstruction power of GP-GRIEF, even when very few eigenfunctions are considered. \vspace{-3mm}
\begin{figure*}[t]
	\centering
	\begin{subfigure}[b]{0.33\textwidth}
		\centering
		\includegraphics[width=0.9\textwidth]{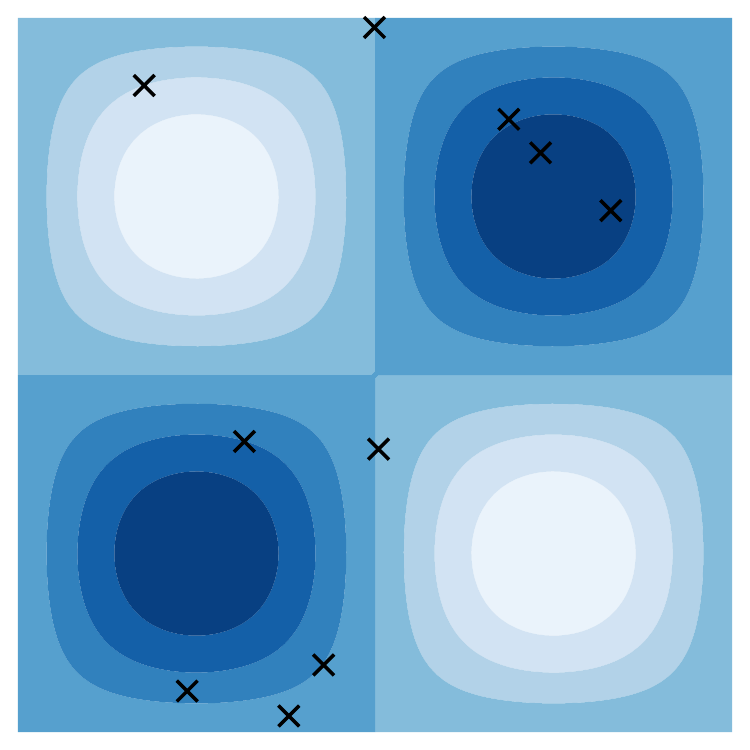}
		\caption{Test Data.}
		\label{fig:exact}
	\end{subfigure}%
	\begin{subfigure}[b]{0.33\textwidth}
		\centering
		\includegraphics[width=0.9\textwidth]{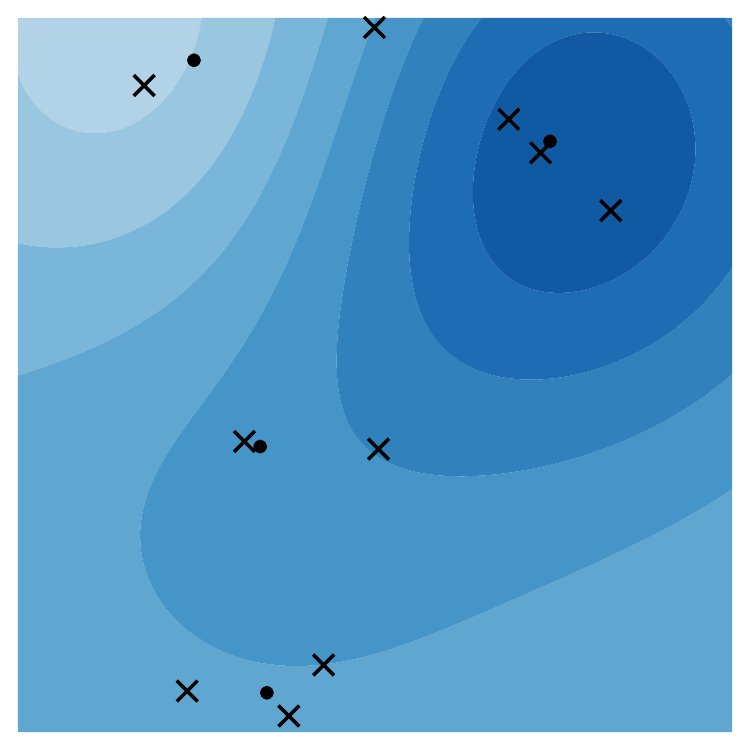}
		\caption{VFE, $m=4$, RMSE${=}0.47$}
		\label{fig:vi}	
	\end{subfigure}%
	\begin{subfigure}[b]{0.33\textwidth}
		\centering
		\includegraphics[width=0.9\textwidth]{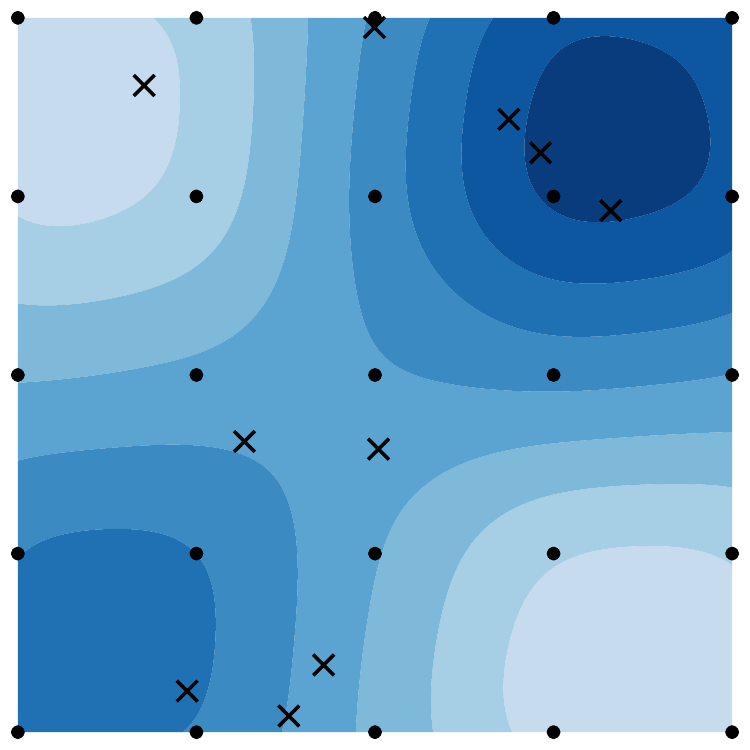}
		\caption{GP-GRIEF, $m{=}25,\ p{=}4$,~RMSE${=}0.34$}
		\label{fig:proposed}	
	\end{subfigure}
	\caption{Reconstruction using GP-GRIEF outperforms VFE.
		Both techniques use an equal number of basis functions and have the same computational complexity.
		Crosses denote training point positions and dots denote inducing point locations.
		\\ \vspace{-7mm} \mbox{} % this is to decrease space left after the caption
		}
	\label{fig:2d}
\end{figure*}

\paragraph{Kernel Reconstruction Accuracy}
We compare the kernel covariance reconstruction error of the GRIEF \nystrom\ method to competing techniques in \cref{fig:nystrom_reconstruction}.
We sample $5000$ points from $\mathcal{U}({-}\sqrt{3},\sqrt{3})$ in $d=100$ dimensions, randomly taking half for training and half for testing, and we consider a squared-exponential kernel.
Given only the training set, we attempt to reconstruct the exact prior covariance matrices between the training set (\cref{fig:train_cov}), and the joint train/test set~(\ref{fig:joint_cov}).
This allows us to study the kernel reconstruction accuracy between points both within, and beyond the training set.
In both studies, the proposed GRIEF \nystrom\ method greatly outperforms a 
random Fourier features reconstruction~\cite{rahimi_rff}, and a
randomized \nystrom\ reconstruction where $m=p$ inducing points are uniformly sampled from the training set. 
%(e.g. \cite{smola_sor} if inducing points are randomly sampled).
We emphasize that both randomized \nystrom\ and GRIEF \nystrom\ have the same computational complexity for a given number of basis functions,~$p$, even though the GRIEF \nystrom\ method uses $m = 10^{200}$ inducing points ($\widebar{m}=100$).

In the joint train/test study of \cref{fig:joint_cov}, we observe a larger gap between GRIEF \nystrom\ and randomized \nystrom\ than in \cref{fig:train_cov}.
This is not surprising since the goal of the randomized \nystrom\ method (and indeed all existing extensions to this technique) is to improve the accuracy of eigenfunctions evaluated on the training set which does not guarantee performance of the eigenfunctions evaluated on points outside this set.
For example, if a test point is placed far from the training set then we expect a poor approximation from existing \nystrom\ methods.
However, our GRIEF \nystrom\ approach attempts to fill out the input space with inducing points everywhere, not just near training points. 
This guarantees an accurate approximation at test locations even if they are far from training data.
Comparatively, the random Fourier features technique samples from a distribution that is independent of the training data, so it is also expected to perform no worse on the joint set than the training set. 
However, we observe that it provides an equally poor reconstruction on both sets.

The black curves in \cref{fig:nystrom_reconstruction} show the exact eigen-decomposition of the covariance matrices which demonstrates the optimal reconstruction accuracy for a kernel approximation with a given number of basis functions.
We observe that GP-GRIEF approaches this optimal accuracy in both studies, even though the test point distribution is not known at training time.
\begin{figure}[t]
	\begin{subfigure}[b]{\linewidth}
	\includegraphics[width=\linewidth]{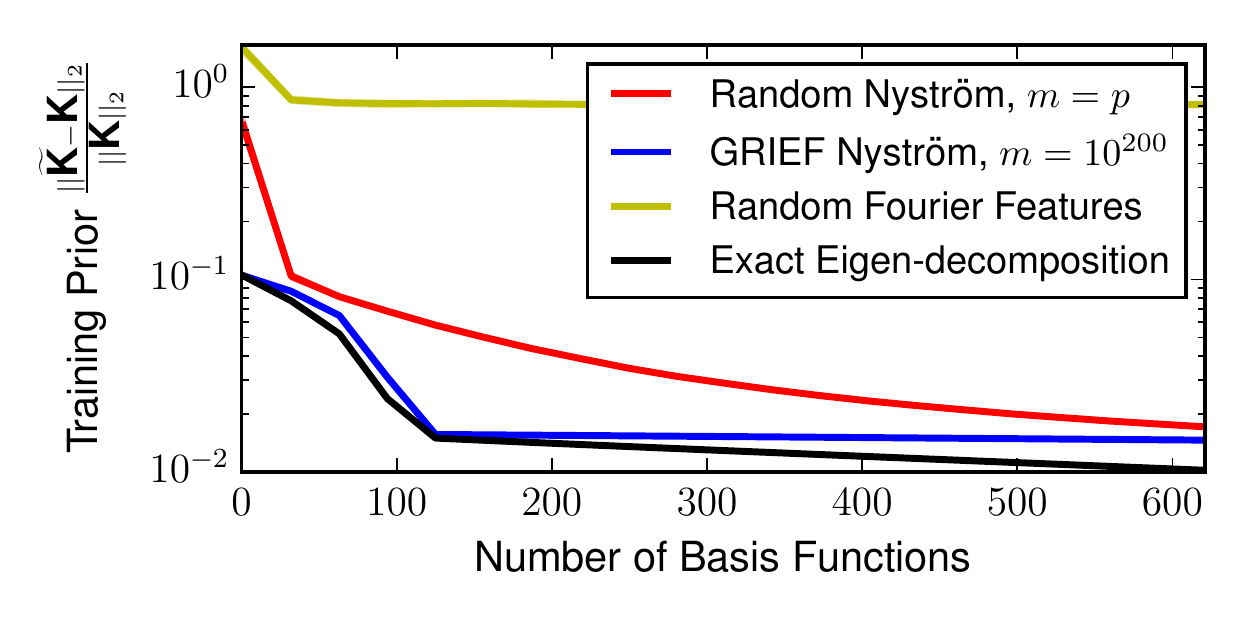}
	\vspace{-6mm}
	\caption{Training prior covariance error.}
	\label{fig:train_cov}
	\end{subfigure}
	\begin{subfigure}[b]{\linewidth}
	\includegraphics[width=\linewidth]{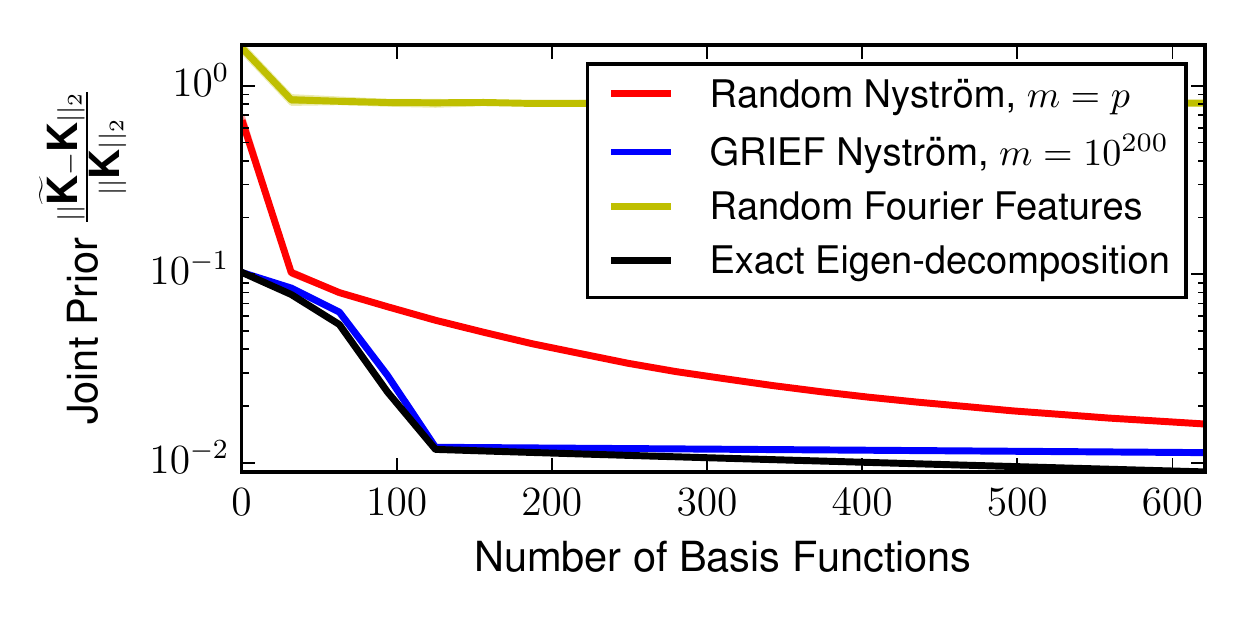}
	\vspace{-6mm}
	\caption{Train/test joint prior covariance error.}
	\label{fig:joint_cov}
	\end{subfigure}
	\caption{Covariance matrix reconstruction error of GP-GRIEF beats randomized \nystrom\ with uniform sampling averaged over 10 samples.
	GP-GRIEF approaches the optimal reconstruction accuracy of the black curve.
	\\ \vspace{-6mm} \mbox{} % this is to decrease space left after the caption
	}
	\label{fig:nystrom_reconstruction}
	
\end{figure}

\paragraph{UCI Regression Studies}
\begin{table*}[t]
	\centering
	\def\arraystretch{1.1}
	{\footnotesize
		%\hspace{-1cm}%
		\input{uci_results.csv}}
	\caption{
		Mean and standard deviation of test error and average training time (including hyperparameter estimation or MCMC sampling) from 10-fold cross validation (90\% train, 10\% test per fold) on UCI regression datasets. 
		%\\ \vspace{-6mm} \mbox{} % this is to decrease space left after the caption	
	}
	\label{tbl:uci}
\end{table*}
We next assess performance on real-world regression datasets from the UCI repository.
Using the authors' code%
\footnote{\url{https://github.com/treforevans/gp_grief}}, we report the mean and standard deviation of the RMSE from 10-fold cross validation%
\footnote{90\% train, 10\% test per fold. We use folds from\\\url{https://people.orie.cornell.edu/andrew/code}}.
Also presented is the mean training time per fold on a machine with two E5-2680 v3 processors.
We use a squared-exponential kernel with automatic relevance determination (SE-ARD) and we compare our test errors to those reported by \citet{yang_a_la_carte} using type-II inference on the same train-test splits.
\citet{yang_a_la_carte} used an exact GP with an SE-ARD kernel for datasets with $n < 2000$, and Fastfood expansions were used to approximate the SE-ARD kernel for the larger datasets ($n>2000$).

Before training the GP-GRIEF models, we initialize the base kernel hyperparameters, $\mat{\theta}$, by maximizing the marginal likelihood of an exact GP constructed on $\min(n,1000)$ points randomly selected from the dataset. 
We then train GP-GRIEF-II or GP-GRIEF-I which differ by not only type-II or type-I inference but also the kernel parametrizations used.
GP-GRIEF-II uses the kernel from \cref{eqn:nystrom_kernel} which is parametrized by the base kernel hyperparameters: $\{\mat{\theta},\sigma^2\}$.
The presented training time includes log marginal likelihood maximization to estimate the hyperparameters, beginning with the initialized values.
GP-GRIEF-I uses the kernel from \cref{eqn:reweighted_kernel} which is parametrized by the basis function weights:
$\{\mat{w},\sigma^2\}$; the base kernel hyperparameters, $\mat{\theta}$, are fixed to the initialized values.
We integrate out $\{\mat{w},\sigma^2\}$ using Metropolis adjusted Langevin dynamics MCMC which uses gradient information~\cite{girolami_MALA}.
The training time includes MCMC sampling, which we run for 10000 iterations.
We use log-normal priors with \{mode, variance\} of 
$\{1,100\}$ for $\mat{w}$, and 
$\{\sigma_0^2,0.04\}$ for $\sigma^2$, where $\sigma_0^2$ is the initialized value. 
We begin sampling at the prior mode, burning the first 1000 samples and thinning every 50 thereafter.
For datasets with $n > 10^6$ we use the \order{p} LML computations described in \cref{sec:mercer_expansion}.
For all studies, we fix $\widebar{m}=10$, 
and we fix $p=1000$ for GP-GRIEF-I.
For GP-GRIEF-II, we make $p$ proportional to $n$ by rounding $n$ down to the nearest power of ten, or take 1000 if it is lesser, i.e.~$p=\min(1000,10^{\lfloor \log_{10} n \rfloor})$.

It is firstly evident that both GP-GRIEF-I and GP-GRIEF-II outperform the exact GP presented by \citet{yang_a_la_carte} on nearly every small dataset ($n < 2000$).
In particular, GP-GRIEF-I performs extremely well on these small datasets as we would expect since it uses a very flexible kernel and is robust to over-fitting as a result of the principled Bayesian approach employed.
On larger datasets, where we expect the hyperparameter posterior to be more peaked, we see that the type-II techniques begin to be competitive.
On these larger datasets, both GP-GRIEF techniques show comparable test error to \citet{yang_a_la_carte} on all datasets but perform considerably better on kin40k and the electric dataset with two-million training points.
With respect to time, we note that the GP-GRIEF-I model trained extremely rapidly considering a fully Bayesian approach was taken;
only 25 minutes were required for the two-million point electric dataset even though this size is prohibitive for most GP models taking a type-II empirical Bayes approach.

The independence of computational complexity on $m$ allows enormous numbers of inducing points to be used.
We use $m{=}10^{33}$ inducing points for the cancer dataset which demonstrates the efficiency of the matrix algebra employed since storing a double-precision vector of this length requires 
%8~trillion zettabytes, % 10^33 * 8 / 10^21
8~quadrillion exabytes; % 10^33 * 8 / 10^18
%8~quintillion petabytes, % 10^33 * 8 / 10^15
far more than all combined hard-disk space on earth.

\section{Conclusion}
Our new technique, GP-GRIEF, has been outlined along with promising initial results on large real-world datasets where we demonstrated GP training and inference in \order{p} time with \order{p} storage.
This fast training enables type-I Bayesian inference to remain computationally attractive even for very large datasets as we had shown in our studies.
We showed that our complexities are independent of $m$, allowing us to break the \emph{curse of dimensionality} inherent to methods that manipulate distributions of points on a full Cartesian product grid.
Asymptotic results were also presented to show why a choice of large $m$ is important to provide an accurate global kernel approximation.
Lastly, we considered the use of up to $10^{33}$ inducing points in our regression studies, demonstrating the efficiency of the matrix algebra employed.
We discussed how the developed algebra can be used in areas beyond the focus of the numerical studies, such as in a general kernel interpolation framework, or in general kernel matrix preconditioning applications.
However, it will be interesting to explore what other applications could exploit the developed matrix algebra techniques.

\FloatBarrier
{
	%\small
	\section*{Acknowledgements} Research funded by an NSERC Discovery Grant and the Canada Research Chairs~program.
}
\dobib
\FloatBarrier
\newpage \mbox{}
\newpage
\appendix

\section{Re-Weighted Eigenfunction Kernel Derivatives}
\label{sec:typeI_derivatives}
Calculating the $p+1$ derivatives of the log marginal likelihood (LML) with respect to $\{\mat{w},\sigma^2\}$ using finite difference approximations would require \order{p^4} time.
We show how all these derivatives can be analytically computed in \order{p^3} time. 
We also discuss how the use of transformed basis functions (i.e. replacing $\mPhi$ with $\widetilde{\mPhi}$) allows derivative computations in \order{p}.

The LML can be written as follows
\begin{multline*}
\log \mathcal{P}(\mat{y} | \mat{\theta}, \sigma^2, \mat{X}) = 
-\tfrac{n}{2} \log(2\pi) \\
-\tfrac{1}{2} \underbrace{\log | \Kxx + \sigma^2 \I{n} |}_\text{Complexity}
-\tfrac{1}{2} \underbrace{\mat{y}^T\alp}_\text{Data Fit},
\end{multline*}
where $\alp =  (\Kxx + \sigma^2 \I{n})^{-1} \mat{y}$.
For clarity, we will derive the gradients of the complexity and data-fit terms separately.

\paragraph{Data-Fit Weight Derivatives}
First we show how the derivative of the data-fit term can be computed within this time
\begin{align*}
\frac{\partial \mat{y}^T \mat{\alpha}}{\partial w_i} 
= -\mat{\alpha}^T \frac{\partial \widetilde{\mat{K}}}{\partial w_i} \mat{\alpha} 
= -\big(\mat{\phi}_i^T \mat{\alpha}\big)^2
\end{align*}
where $\mat{\alpha} =  \big(\widetilde{\mat{K}}_\text{X,X} + \sigma^2\mat{I}_n\big)^{-1} \mat{y} \inR{n}$, and
we make the observation that $\frac{\partial \widetilde{\mat{K}}}{\partial w_i} = \mat{\phi}_i\mat{\phi}_i^T$.
We can vectorize this to compute all data-fit derivatives
\begin{align*}
\frac{\partial \mat{y}^T \mat{\alpha}}{\partial \mat{w}}
&= -\big(\mat{\Phi}^T \mat{\alpha}\big)^2,\\
&= - \sigma^{-4} \big(\mPhi^T\y - \mat{A}\mat{P}^{-1}\mPhi^T\y\big)^2,
\end{align*}
where
$\mat{A} = \mat{\Phi}^T\mat{\Phi} \inR{p \times p}$ and
$\mPhi^T\y \inR{p}$ are both precomputed before LML iterations begin, and
$\mat{P} = \sigma^2 \mat{W}^{-1} + \mat{A} \inR{p \times p}$ is also required to compute the LML (see \cref{eqn:inv_det_lemmas}) so it is already computed and factorized.
Evidently, the data-fit term derivatives can be computed in \order{p^3} at each LML iteration.

\paragraph{Complexity Term Weight Derivatives}
Now we derive the complexity term gradient.
\begin{align*}
\frac{\partial \log|\widetilde{\mat{K}} + \sigma^2 \mat{I}_n|}{\partial w_i} 
&= \text{Tr}\bigg[\big(\widetilde{\mat{K}} + \sigma^2 \mat{I}_n \big)^{-1} \frac{\partial \widetilde{\mat{K}}}{\partial w_i} \bigg],
\end{align*}
using $\frac{\partial \widetilde{\mat{K}}}{\partial w_i} = \mat{\phi}_i\mat{\phi}_i^T$
and the cyclic permutation invariance of the trace operation, we get
\begin{align*}
\frac{\partial \log|\widetilde{\mat{K}} + \sigma^2 \mat{I}_n|}{\partial w_i} 
&= \mat{\phi}^T_i \big(\widetilde{\mat{K}} + \sigma^2 \mat{I}_n \big)^{-1} \mat{\phi}_i.
\end{align*}
Using the matrix inversion lemma, the preceding equation becomes
\begin{align*}
\frac{\partial \log|\widetilde{\mat{K}} + \sigma^2 \mat{I}_n|}{\partial w_i} 
&= \sigma^{-2} \big(\mat{\phi}^T_i \mat{\phi}_i - \mat{\phi}^T_i\mat{\Phi}\mat{P}^{-1}\mat{\Phi}^T \mat{\phi}_i\big),\\
&= \sigma^{-2} \big(a_{ii} - \mat{a}_i^T\mat{P}^{-1}\mat{a}_i\big).
\end{align*}
Evidently, the complexity term derivatives each require \order{p^2} time so all $p$ derivatives can be computed in \order{p^3}. 
We can write the vectorized computation as
\begin{align*}
\frac{\partial \log|\widetilde{\mat{K}} + \sigma^2 \mat{I}_n|}{\partial \mat{w}}
=  \sigma^{-2}\big[\text{diag}\big(\mat{A}\big) - \big(\mat{A} \odot \mat{P}^{-1}\mat{A}\big)^T \mat{1}_p\big],
\end{align*}
where it is evident that the dominating expense $\mat{P}^{-1}\mat{A}$ is also required for the data-fit derivatives.

\paragraph{Noise Variance Derivatives}
We show here how the derivatives of the LML with respect to $\sigma^2$ can be computed in \order{n+p}, as follows
\begin{multline*} % see notebook entry dec 23 & 28 for derivations
\frac{\partial \mat{y}^T \mat{\alpha}}{\partial \sigma^2} 
= -\mat{\alpha}^T \frac{\partial (\sigma^2 \I{n})}{\partial \sigma^2} \mat{\alpha} 
=-\mat{\alpha}^T \mat{\alpha},\\
= -\sigma^{-4}\big(\y^T\y - 2\y^T\mPhi\mat{P}^{-1}\mPhi^T\y + \y^T\mPhi\mat{P}^{-1}\mat{A}\mat{P}^{-1}\mPhi^T\y\big),
\end{multline*}
and
\begin{align*}
\frac{\partial \log|\widetilde{\mat{K}} + \sigma^2 \mat{I}_n|}{\partial \sigma^2} 
&= \text{Tr}\bigg(\big(\widetilde{\mat{K}} + \sigma^2 \mat{I}_n \big)^{-1} \frac{\partial (\sigma^2 \I{n})}{\partial \sigma^2} \bigg)\\
&= \text{Tr}\Big( \sigma^{-2}\big[
\mat{I}_n - \mat{\Phi}\mat{P}^{-1}\mat{\Phi}^T
\big]\Big)\\
&= \sigma^{-2}\big[n - \text{Tr}\big(\mat{P}^{-1} \mat{A}\big)\big].
\end{align*}
Evidently the first relation can be computed in \order{p^3} if
$\y^T\y$, $\mPhi^T\y$, and $\mat{A}$ are precomputed, and 
the second relation can be computed in \order{p} since the matrix product $\mat{P}^{-1} \mat{A}$ has already been explicitly computed to compute the derivatives with respect to $\mat{w}$.

\paragraph{Final Expressions} 
Combining the derived expressions for the derivatives of the LML with respect to the $p+1$ hyperparameters $\{\mat{w}, \sigma^2\}$, it is evident that all computations can be performed in \order{p^3}.
We can write the final expressions as follows
\begin{align*}
\frac{\partial \text{LML}}{\partial \mat{w}}
&{=} \frac{\big(\mat{r} - \mat{A}\mat{P}^{-1}\mat{r}\big)^2}{2\sigma^{4}}
- \frac{\text{diag}\big(\mat{A}\big) - \big(\mat{A} \odot \mat{P}^{-1}\mat{A}\big)^T \mat{1}_p}{2\sigma^{2}},\\
\frac{\partial \text{LML}}{\partial \sigma^2} 
&{=} \frac{\y^T\y {-} 2\mat{r}^T\mat{P}^{-1}\mat{r} {+} \mat{r}^T\mat{P}^{-1}\mat{A}\mat{P}^{-1}\mat{r}}{2\sigma^{4}}
{-} \frac{n {-} \text{Tr}\big(\mat{P}^{-1} \mat{A}\big)}{2\sigma^{2}},
\end{align*}
where 
$\mat{r} = \mPhi^T\y \inR{p}$.
If we transform the basis functions by replacing $\mPhi$ with $\widetilde{\mPhi}$ then it is evident that the derivative computations can be made in \order{p} since both $\mat{A} = \widetilde{\mPhi}^T\widetilde{\mPhi} = \mat{I}_{\widetilde{p}}$ and $\mat{P} = \sigma^2 \mat{W}^{-1} + \mat{A}$ will be diagonal.

\section{Type-I Inference Extensions}
\label{sec:typeI_extensions}
Here we discuss extensions to the type-I inference approach described in \cref{sec:mercer_expansion} for the re-weighted eigenfunction kernel.
Namely, we consider more flexible kernel parameterizations that can be used while still admitting MCMC iterations with a complexity independent of the size of the training set, $n$.
To begin, consider the re-weighted kernel in \cref{eqn:reweighted_kernel} from a weight space perspective~\cite{rasmussen_gpml}.
In this case, we are constructing the generalized linear model
$\sum_{i=1}^p \alpha_i\phi_i(\mat{x})$, where $\mat{\alpha} \inR{p}$.
We assume the observed responses are corrupted by independent Gaussian noise with variance $\sigma^2$, and
we specify the prior $\mat{\alpha} \sim \mathcal{N}\big(\mat{0}, \mat{W}\big)$.
This model is identical to the GP specified in \cref{sec:mercer_expansion}, which was introduced from a function space perspective.
There we had shown that we can specify a hyper-prior on $\mat{w}$ and perform MCMC with a complexity independent of $n$.

We had previously taken $\mat{W}$ to be diagonal in \cref{sec:mercer_expansion} which assumes no prior correlation between the basis functions.
The technique can easily be extended by taking $\mat{W}$ to be dense and symmetric positive-definite.
\citet{pinheiro_cov_parameterization} discuss how such a matrix could be parameterized.
In this case, it is easily observed that computation of the log-marginal likelihood can still be computed with a complexity independent of $n$ from \cref{eqn:inv_det_lemmas}.
With a dense $\mat{W}$, the kernel can be written as
\begin{align} \label{eqn:dense_reweighted_kernel}
\widetilde{k}(\x,\z) 
&= \sum_{i=1}^p \sum_{j=1}^p 
w_{ij} \phi_i(\x) \phi_j(\z).
\end{align}

We may also consider specifying a non-zero prior mean for the basis function weights;
$\mat{\alpha} \sim \mathcal{N}\big( \mat{\mu}, \mat{W}\big)$, where $\mat{\mu} \inR{p}$.
This is equivalent to specifying a GP with the kernel in \cref{eqn:dense_reweighted_kernel} and the prior mean function $\sum_{i=1}^p \mu_i \phi_i(\mat{x})$.
In this case, it can be observed that computation of the log-marginal likelihood can still be performed with a complexity independent of $n$ through the relations in \cref{eqn:inv_det_lemmas} where
$\mat{r}$ is replaced by $\mat{r} - \mat{A}\mat{\mu}$,
and
$\mat{y}^T\mat{y}$ is replaced by $\mat{y}^T\mat{y} - 2\mat{r}^T\mat{\mu} + \mat{\mu}^T\mat{A}\mat{\mu}$. 
Similar to $\mat{W}$, we may also specify a hyper-prior on the elements of $\mat{\mu}$ and perform MCMC with these variables as well.
Both of these extensions increase the flexibility of the Bayesian model while ensuring that the computational complexity remains independent of the training dataset size.
This enables type-I inference to be performed on massive datasets.

\section{Source Code}
Source code that implements the methods discussed in the paper along with several tutorials can be found at\\ \url{https://github.com/treforevans/gp_grief}.
The code is implemented in Python and depends upon the \texttt{py-mcmc} package~\cite{bilionis_pymcmc}  for an implementation of Metropolis adjusted Langevin dynamics MCMC.
The code also depends on \texttt{GPy}~\cite{GPy} for its broad library of kernels.

\end{document}